\documentclass{article} % For LaTeX2e
\usepackage{iclr2016_conference,times}
\usepackage{url}

\usepackage{amsmath}
\usepackage{graphicx,subfigure}
\usepackage[colorlinks,linkcolor=red,citecolor=blue,urlcolor=blue,draft]{hyperref}
\usepackage[flushleft]{threeparttable}

\usepackage{algorithm}
\usepackage{algorithmic}

\usepackage{etoolbox}

\usepackage{tabularx}
\usepackage{mdwlist}

\usepackage{multirow}
\usepackage{rotating}
\input{Definitions}

\title{BlackOut: Speeding up Recurrent Neural Network Language Models with Very Large Vocabularies}

\author{Shihao Ji \\
Parallel Computing Lab, Intel\\
\texttt{shihao.ji@intel.com} \\
\And
S. V. N. Vishwanathan \\
Univ. of California, Santa Cruz \\
\texttt{vishy@ucsc.edu}
\And
Nadathur Satish, Michael J. Anderson \& Pradeep Dubey \\
Parallel Computing Lab, Intel\\
\texttt{\{nadathur.rajagopalan.satish,michael.j.anderson,pradeep.dubey\}@intel.com}
}

% The \author macro works with any number of authors. There are two commands
% used to separate the names and addresses of multiple authors: \And and \AND.
%
% Using \And between authors leaves it to \LaTeX{} to determine where to break
% the lines. Using \AND forces a linebreak at that point. So, if \LaTeX{}
% puts 3 of 4 authors names on the first line, and the last on the second
% line, try using \AND instead of \And before the third author name.

\iclrfinalcopy % Uncomment for camera-ready version

\begin{document}

\maketitle

\begin{abstract}
  We propose \emph{BlackOut}, an approximation algorithm to efficiently
  train massive recurrent neural network language models (RNNLMs) with 
  million word vocabularies. BlackOut is motivated by using a
  discriminative loss, and we describe a weighted sampling strategy which
  significantly reduces computation while improving stability, sample
  efficiency, and rate of convergence. One way to understand BlackOut
  is to view it as an extension of the DropOut strategy to the output
  layer, wherein we use a discriminative training loss and a weighted
  sampling scheme. We also establish close connections between BlackOut,
  importance sampling, and noise contrastive estimation (NCE).  Our
  experiments, on the recently released one billion word language
  modeling benchmark, demonstrate scalability and accuracy of BlackOut;
  we outperform the state-of-the art, and achieve the lowest perplexity
  scores on this dataset. Moreover, unlike other established methods which typically require GPUs or CPU clusters, we show that a carefully implemented version of BlackOut requires only 1-10 days on a single
  machine to train a RNNLM with a million word vocabulary and billions
  of parameters on one billion words. Although we describe BlackOut in the context of RNNLM training, it can be used to any networks with large softmax output layers.
\end{abstract}

\section{Introduction}
\label{sec:Introduction}

Statistical language models are a crucial component of speech
recognition, machine translation and information retrieval systems. In
order to handle the data sparsity problem associated with traditional
$n$-gram language models (LMs), neural network language models (NNLMs)
\citep{BenDucVin01} represent the history context in a
continuous vector space that can be learned towards error rate
reduction by sharing data among similar contexts. Instead of using fixed 
number of words to represent context, recurrent neural network language 
models (RNNLMs) \citep{MikKarBur10} use a recurrent hidden layer to 
represent longer and variable length histories. RNNLMs significantly 
outperform traditional $n$-gram LMs, and are therefore becoming an 
increasingly popular choice for practitioners \citep{MikKarBur10,SunOpaGau13,DevZbiHua14}. 

\begin{figure*}[htb]\vspace{-0.0cm}
  \begin{center}
    \includegraphics[width=4in]{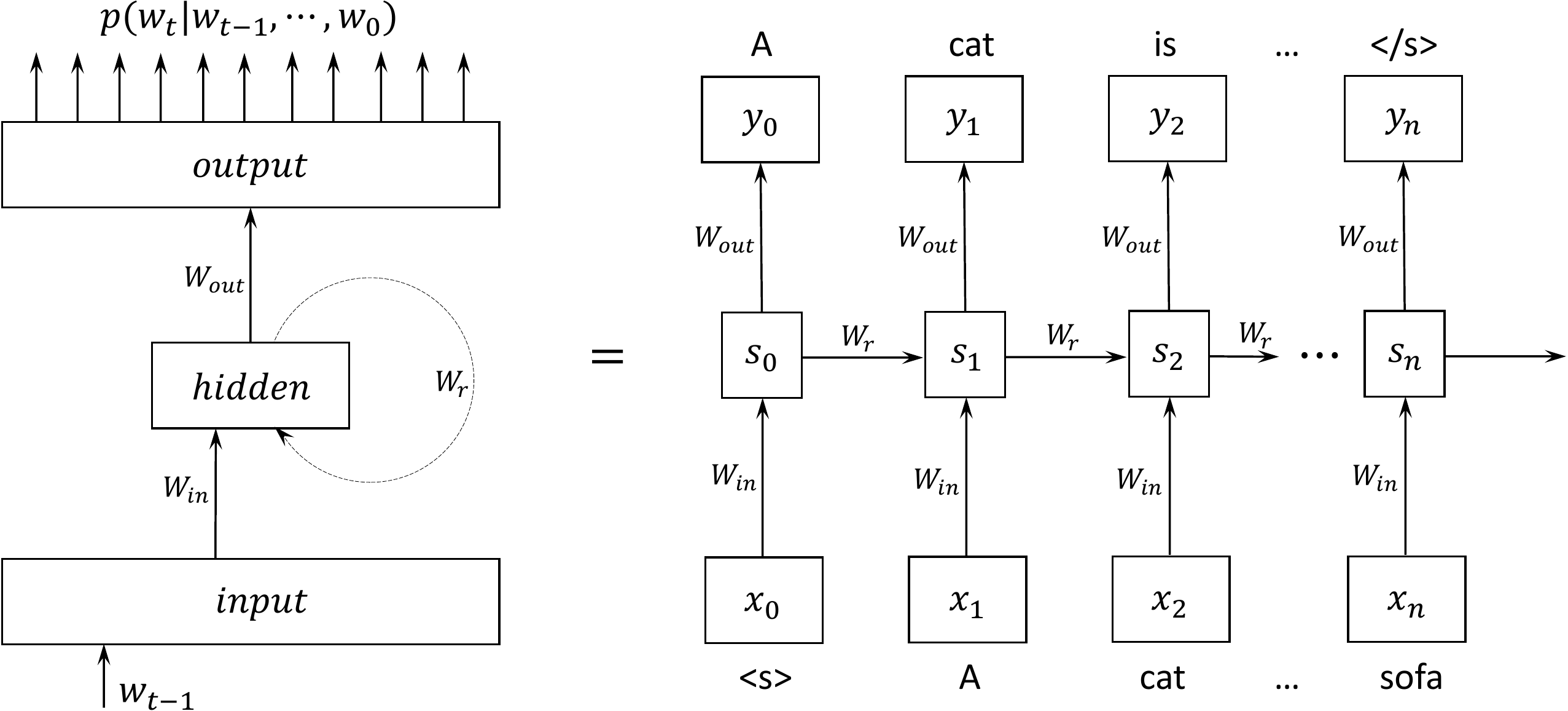}
  \end{center}
  \vspace{-0.0cm}
  \caption{The network architecture of a standard RNNLM and its unrolled
    version for an example input sentence: \texttt{<s>} A cat is sitting
    on a sofa \texttt{</s>}.} 
  \label{fig:rnnlm_arch}
  \vspace{-0.0cm}
\end{figure*}

Consider a standard RNNLM, depicted in Figure~\ref{fig:rnnlm_arch}. The
network has an input layer $x$, a hidden layer $s$ (also called context
layer or state) with a recurrent connection to itself, and an output
layer $y$. Typically, at time step $t$ the network is fed as input
$x_{t} \in \RR^{V}$, where $V$ denotes the vocabulary size, and
$s_{t-1} \in \RR^{h}$, the previous state. It produces a hidden state
$s_{t} \in \RR^{h}$, where $h$ is the size of the hidden layer, which in
turn is transformed to the output $y_{t} \in \RR^{V}$. Different layers
are fully connected, with the weight matrices denoted by
$\Omega=\{W_{in}^{V\times h}, W_r^{h\times h},W_{out}^{V\times h}\}$.

For language modeling applications, the input $x_t$ is a sparse
vector of a 1-of-$V$ (or one-hot) encoding with the element
corresponding to the input word $w_{t-1}$ being 1 and the rest of 
components of $x_{t}$ set to 0; the state of the network $s_t$ is a dense vector,
summarizing the history context $\{w_{t-1},\cdots,w_0\}$ preceding the
output word $w_t$; and the output $y_t$ is a dense vector, with
the $i$-th element denoting the probability of the next word being
$w_{i}$, that is, $p(w_i|w_{t-1},\cdots,w_0)$, or more concisely,
$p(w_i|s_t)$. The input to output transformation occurs via:
\begin{align}
  \label{eq:rnnlm-hidden}
  s_t &= \sigma(W_{in}^Tx_t+W_rs_{t-1})\\
  \label{eq:rnnlm}
  y_t &= f(W_{out}s_t),
\end{align}
where $\sigma(v)=1/(1+\exp(-v))$ is the sigmoid activation function, and
$f(\cdot)$ is the softmax function
$f(u_i):=\exp(u_i)/\sum_{j=1}^{V}\exp(u_j)$.

One can immediately see that if $x_{t}$ uses a 1-of-$V$ encoding, then
the computations in equation \eqref{eq:rnnlm-hidden} are relatively inexpensive
(typically $h$ is of the order of a few thousand, and the computations
are $\Ocal(h^2)$), while the computations in equation \eqref{eq:rnnlm} are
expensive (typically $V$ is of the order of a million, and the
computations are $\Ocal(Vh)$). Similarly, back propagating the gradients
from the output layer to the hidden layer is expensive. Consequently,
the training times for some of the largest models reported in literature
are of the order of weeks \citep{MikDeoPov11,WilPraMrv15}.

In this paper, we ask the following question: Can we design an
approximate training scheme for RNNLM which will improve on the state of
the art models, while using significantly less computational resources?
Towards this end, we propose \emph{BlackOut} an approximation algorithm
to efficiently train massive RNNLMs with million word vocabularies. 
BlackOut is motivated by using a discriminative loss, and we describe a 
weighted sampling strategy which significantly reduces computation while 
improving stability, sample efficiency, and rate of convergence. 
We also establish close connections between BlackOut, importance sampling, 
and noise contrastive estimation (NCE) \citep{GutHyv12,MniTeh12}, and demonstrate that BlackOut mitigates some of the limitations of both previous methods. Our experiments, on the recently released one billion word language
modeling benchmark \citep{CheMikSch14}, demonstrate scalability and
accuracy of BlackOut; we outperform the state-of-the art, achieving
the lowest perplexity scores on this dataset. Moreover, unlike other
established methods which typically require GPUs or CPU clusters, we
show that a carefully implemented version of BlackOut requires only 1-10
days on a single CPU machine to train a RNNLM with a million word vocabulary
and billions of parameters on one billion words. 

One way to understand BlackOut is to view it as an extension of the 
DropOut strategy \citep{SriHinKri14} to the output layer, wherein we 
use a discriminative training loss and a weighted sampling scheme. The connection to DropOut is mainly from the way they operate in model training and model evaluation. Similar to DropOut, in BlackOut training a subset of output layer is sampled and trained at each training batch and when evaluating, the full network participates. Also, like DropOut, a regularization technique, our experiments show that the models trained by BlackOut are less prone to overfitting. A primary difference between them is that DropOut is routinely used at input and/or hidden layers of deep neural networks, while BlackOut only operates at output layer. We chose the name BlackOut in light of the similarities between our method and DropOut, and the complementary they offer to train deep neural networks.

% Our paper is structured as following. Sec.~\ref{sec:rnnlm} gives an
% overview of RNNLMs and discusses the prohibitive computational cost
% associated with softmax output layer with very large vocabularies. The
% proposed approximation ``BlackOut" is introduced in
% Sec.~\ref{sec:blackout}, along with the discussion of its connections to
% importance sampling, NCE and related works. We address implementation
% details and further speed-up in Sec.~\ref{sec:implementation}. The
% experimental results are provided in Sec.~\ref{sec:exp}, followed by the
% conclusion in Sec.~\ref{sec:conclusion}.

\section{BlackOut: A sampling-based approximation} 
\label{sec:blackout}

We will primarily focus on estimation of the matrix $W_{out}$. To
simplify notation, in the sequel we will use $\theta$ to denote
$W_{out}$ and $\theta_{j}$ to denote the $j$-th row of
$W_{out}$. Moreover, let $\inner{\cdot}{\cdot}$ denote the dot product
between two vectors. Given these notations, one can rewrite equation
\eqref{eq:rnnlm} as 
\begin{align}
  p_\theta\rbr{w_i|s} =
  \frac{\exp\rbr{\inner{\theta_i}{s}}}{\sum_{j=1}^V\exp\rbr{\inner{\theta_j}{s}}}\quad
  \forall i\in\{1,\cdots,V\}.
  \label{eq:full_softmax}
\end{align}
RNNLMs with a softmax output layer are typically trained using cross-entropy
as the loss function, which is equivalent to maximum likelihood (ML)
estimation, that is, to find the model parameter $\theta$ which
maximizes the log-likelihood of target word $w_i$, given a history context $s$:
\begin{align}
  J_{ml}^s(\theta) = \log p_\theta(w_i|s), 
  \label{eq:ml_loss}
\end{align}
whose gradient is given by
\begin{align}
  \frac{\partial J_{ml}^s(\theta)}{\partial\theta} &= \frac{\partial}{\partial\theta}\inner{\theta_i}{s} - \sum_{j=1}^Vp_\theta\rbr{w_j|s}\frac{\partial}{\partial\theta}\inner{\theta_j}{s}, \nonumber\\
  &= \frac{\partial}{\partial\theta}\inner{\theta_i}{s} - \EE_{p_\theta(w|s)}\sbr{\frac{\partial}{\partial\theta}\inner{\theta_w}{s}}.
  \label{eq:gradient_of_ml_loss}
\end{align}
The gradient of log-likelihood is expensive to evaluate because (1) the cost of computing $p_\theta(w_j|s)$ is $\Ocal(Vh)$ and (2) the summation above takes time linear in the vocabulary size $\Ocal(V)$.

To alleviate the computational bottleneck of computing the gradient \eqref{eq:gradient_of_ml_loss}, we propose to use the following
\emph{discriminative} objective function for training RNNLM: 
\begin{align}
  J_{disc}^s(\theta) = \log \tilde{p}_\theta(w_i|s) + \sum_{j \in
  S_K}\log(1-\tilde{p}_\theta(w_j|s)), 
  \label{eq:disc_loss}
\end{align}
where $S_{K}$ is a set of indices of $K$ words drawn from the vocabulary, and $i\notin S_K$. Typically, 
$K$ is a tiny fraction of $V$, and in our experiments we use $K \approx V/200$. To generate $S_{K}$ we
will sample $K$ words from the vocabulary using an easy to sample distribution
$Q(w)$, and set $q_j : =\frac{1}{Q(w_j)}$ in order to compute
% Instead of evaluating all the words in the vocabulary, we
% generate $K$ samples from an easy-to-sample-from distribution $Q(w)$ and
% estimate the model parameters $\theta$ of sampled output layer by
% maximizing
% \begin{align}
%   J_{disc}^s(\theta) = \log \tilde{p}_\theta(w_i|s) + \sum_{j\in S_K}\log(1-\tilde{p}_\theta(w_j|s)),
%   \label{eq:disc_loss}
% \end{align}
% where $S_K$ is the set of indices of the $K$ samples, and
\begin{align}
  \tilde{p}_\theta(w_i|s) =
  \frac{q_i\exp\rbr{\inner{\theta_i}{s}}}{q_i\exp\rbr{\inner{\theta_i}{s}}+\sum_{j\in
  S_K}q_j\exp\rbr{\inner{\theta_j}{s}}}. 
  \label{eq:posterior}
\end{align}

Equation~\ref{eq:disc_loss} is the cost function of a standard logistic
regression classifier that discriminates one positive sample $w_i$ from
$K$ negative samples $w_j, \forall j\in S_K$. The first term in
\eqref{eq:disc_loss} corresponds to the traditional maximum likelihood
training, and the second term explicitly pushes down the probability of
negative samples in addition to the implicit shrinkage enforced by the
denominator of \eqref{eq:posterior}. In our experiments, we found the
discriminative training~(\ref{eq:disc_loss}) outperforms the maximum
likelihood training (the first term of Eq.~\ref{eq:disc_loss}) in all the 
cases, with varying degree of accuracy improvement depending on $K$. 

The weighted softmax function~(\ref{eq:posterior}) can be considered as a stochastic version of the standard softmax~(\ref{eq:full_softmax}) on a different base measure. While the standard softmax~(\ref{eq:full_softmax}) uses a base measure which gives equal weights to all words, and has support over the entire vocabulary, the base measure used in (\ref{eq:posterior}) has support only on $K+1$ words: the target word $w_i$ and $K$ samples from $Q(w)$. The noise portion of (\ref{eq:posterior}) has the motivation from the sampling scheme, and the $q_i$ term for target word $w_i$ is introduced mainly to balance the contributions from target word and noisy sample words.\footnote{It's shown empirically in our experiments that setting $q_i=1$ in~(\ref{eq:posterior}) hurts the accuracy significantly.} Other justifications are discussed in Sec.~\ref{sec:IS} and Sec.~\ref{sec:NCE}, where we establish close connections between BlackOut, importance sampling, and noise contrastive estimation.

Due to the weighted sampling property of BlackOut, some words might be sampled multiple times according to the proposal distribution $Q(w)$, and thus their indices may appear multiple times in $S_K$. As $w_i$ is the target word, which is assumed to be included in computing (\ref{eq:posterior}), we therefore set $i\notin S_K$ explicitly.

Substituting \eqref{eq:posterior} into \eqref{eq:disc_loss} and letting
$u_j = \inner{\theta_j}{s}$ and
$\tilde{p}_j = \tilde{p}_\theta\rbr{w_j|s}$, we have
\begin{align}
  J_{disc}^s(\theta) \propto u_i - (K+1)\log\mkern-15mu\sum_{k\in
  \cbr{i}\cup S_K}\mkern-15muq_k\exp(u_k) + \sum_{j\in
  S_K}\mkern-5mu\log\rbr{\sum_{k\in\cbr{i} \cup S_K}\mkern-15mu q_k\exp(u_k)-q_j\exp(u_j)}.
\end{align}
Then taking derivatives with respect to
$u_j, \forall j\in\cbr{i} \cup S_K$, yields
\begin{align}
  \frac{\partial J_{disc}^s(\theta)}{\partial u_i} &= 1 - \rbr{K + 1 - \sum_{j\in S_K}\frac{1}{1-\tilde{p}_j}}\tilde{p}_i \label{eq:gradient_disc_i}\\
  \frac{\partial J_{disc}^s(\theta)}{\partial u_j} &= - \rbr{K + 1 -
                                                     \sum_{k\in S_K\setminus \cbr{j}}\frac{1}{1-\tilde{p}_k}}\tilde{p}_j, \quad \text{for } j\in S_K. \label{eq:gradient_disc_j}
\end{align}
By the chain rule of derivatives, we can propagate the errors backward
to previous layers and compute the gradients with respect to the full
model parameters $\Omega$. In contrast to Eq.~\ref{eq:gradient_of_ml_loss}, Eqs.~\ref{eq:gradient_disc_i} and 
\ref{eq:gradient_disc_j} are much cheaper to evaluate as (1) the cost of 
computing $\tilde{p}_j$ is $\Ocal(Kh)$ and (2) the summation takes $\Ocal(K)$,
hence roughly a $V/K$ times of speed-up. 

Next we turn our attention to the proposal distribution $Q(w)$. In the
past, a uniform distribution or the unigram distribution have been
advocated as promising candidates for sampling distributions
\citep{BenSen03, JeaChoMem15, BenSen08, MniTeh12}. As we will see in the
experiments, neither one is suitable for a wide range of
datasets, and we find that the power-raised unigram distribution of \cite{MikSutCheCoretal13} is very important in this context:
\begin{align}
  Q_\alpha(w) \propto p_{uni}^\alpha(w),\quad \alpha\in[0, 1]. 
  \label{eq:proposal_function}
\end{align}
Note that $Q_\alpha(w)$ is a generalization of uniform distribution
(when $\alpha=0$) and unigram distribution (when $\alpha=1$).  The
rationale behind our choice is that by tuning $\alpha$, one can
interpolate smoothly between sampling popular words, as advocated by the
unigram distribution, and sampling all words equally. The best $\alpha$ 
is typically dataset and/or problem dependent; in our experiments, we use
a holdout set to find the best value of $\alpha$. It's worth noting that this sampling strategy has been used by \cite{MikSutCheCoretal13} 
in a similar context of word embedding, while here we explore its effect in the language modeling applications.

After BlackOut training, we evaluate the predictive performance of 
RNNLM by perplexity. To calculate perplexity, we explicitly normalize 
the output distribution by using the exact softmax function~(\ref{eq:full_softmax}). This is similar to DropOut \citep{SriHinKri14}, 
wherein a subset of network is sampled and trained at each training batch and 
when evaluating, the full network participates.

\subsection{Connection to Importance Sampling}\label{sec:IS}
BlackOut has a close connection to importance sampling (IS). To see this, differentiating the logarithm of Eq.~\ref{eq:posterior} with respect to model parameter $\theta$, we have
\begin{align}
    \frac{\partial}{\partial\theta}\log\tilde{p}_\theta(w_i|s)&= \frac{\partial}{\partial\theta}\inner{\theta_i}{s} - \frac{1}{\sum_{k\in\{i\}\cup S_K}q_k\exp(\inner{\theta_k}{s})}\sum_{j\in\{i\}\cup S_K}q_j\exp(\inner{\theta_j}{s})\frac{\partial}{\partial\theta}\inner{\theta_j}{s} \nonumber \\
    &= \frac{\partial}{\partial\theta}\inner{\theta_i}{s} - \EE_{\tilde{p}_\theta(w|s)}\sbr{\frac{\partial}{\partial\theta}\inner{\theta_w}{s}}.
\end{align}
In contrast with Eq.~\ref{eq:gradient_of_ml_loss}, it shows that the weighted softmax function~(\ref{eq:posterior}) corresponds to an IS-based estimator of the standard softmax~(\ref{eq:full_softmax}) with a proposal distribution $Q(w)$.

Importance sampling has been applied to NNLMs with large output layers in previous works \citep{BenSen03, BenSen08, JeaChoMem15}. However, either uniform distribution or unigram distribution is used for sampling and all aforementioned works exploit the maximum likelihood learning of model parameter $\theta$. By contrast, BlackOut uses a discriminative training~(\ref{eq:disc_loss}) and a power-raised unigram distribution $Q_\alpha(w)$ for sampling; these two changes are important to mitigate some of limitations of IS-based approaches. While an IS-based approach with a uniform proposal distribution is very stable for training, it suffers from large bias due to the apparent divergence of the uniform distribution from the true data distribution $p_\theta(w|s)$. On the other hand, a unigram-based IS estimate can make learning unstable due to the high variance \citep{BenSen03, BenSen08}. Using a power-raised unigram distribution $Q_\alpha(w)$ entails a better trade-off between bias and variance, and thus strikes a better balance between these two extremes. In addition, as we will see from the experiments, the discriminative training of BlackOut speeds up the rate of convergence over the traditional maximum likelihood learning.

\subsection{Connection to Noise Contrastive Estimation}\label{sec:NCE}
The basic idea of NCE is to transform the density estimation problem to the problem of learning by comparison, e.g., estimating the parameters of a binary classifier that distinguishes samples from the data distribution $p_d$ from samples generated by a known noise distribution $p_n$ \citep{GutHyv12}. In the language modeling setting, the data distribution $p_d$ will be the distribution $p_\theta(w|s)$ of interest, and the noise distribution $p_n$ is often chosen from the ones that are easy to sample from and possibly close to the true data distribution (so that the classification problem isn't trivial). While \cite{MniTeh12} uses a context-independent (unigram) noise distribution $p_n(w)$, BlackOut can be formulated into the NCE framework by considering a context-dependent noise distribution $p_n(w|s)$, estimated from $K$ samples drawn from $Q(w)$, by
\begin{align}
  p_n(w_i|s) = \frac{1}{K}\sum_{j\in S_K}\frac{q_j}{q_i}p_\theta(w_j|s),
  \label{eq:noise distribution}
\end{align}
which is a probability distribution function under the expectation that $K$ samples are drawn from $Q(w)$: $S_K\sim Q(w)$ since $\EE_{S_K\sim Q(w)}(p_n(w_i|s))=Q(w_i)$ and $\EE_{S_K\sim Q(w)}(\sum_{i=1}^Vp_n(w_i|s))=1$ (See the proof in Appendix~\ref{app:noise dist}).

Similar to \cite{GutHyv12}, noise samples are assumed $K$ times more frequent than data samples so that data points are generated from a mixture of two distributions: $\frac{1}{K+1}p_\theta(w|s)$ and $\frac{K}{K+1}p_n(w|s)$. Then the conditional probability of sample $w_i$ being generated from the data distribution is
\begin{align}
  p_\theta(D=1|w_i,s) = \frac{p_\theta(w_i|s)}{p_\theta(w_i|s) + Kp_n(w_i|s)}.
  \label{eq:posterior2}
\end{align}
Inserting Eq.~\ref{eq:noise distribution} into Eq.~\ref{eq:posterior2}, we have
\begin{align}
  p_\theta(D=1|w_i,s) = \frac{q_i\exp(\inner{\theta_i}{s})}{q_i\exp(\inner{\theta_i}{s}) + \sum_{j\in S_K}q_j\exp(\inner{\theta_j}{s})},
  \label{eq:posterior3}
\end{align}
which is exactly the weighted softmax function defined in~(\ref{eq:posterior}). Note that due to the noise distribution proposed in Eq.~\ref{eq:noise distribution}, the expensive denominator (or the partition function $Z$) of $p_\theta(w_j|s)$ is canceled out, while in \cite{MniTeh12} the partition function $Z$ is either treated as a free parameter to be learned or approximated by a constant. \cite{MniTeh12} recommended to set $Z=1.0$ in the NCE training. However, from our experiments, setting $Z=1.0$ often leads to sub-optimal solutions\footnote{Similarly, \cite{CheLiuGal15} reported that setting $\ln(Z)=9$ gave them the best results.} and different settings of $Z$ sometimes incur numerical instability since the log-sum-exp trick\footnote{https://en.wikipedia.org/wiki/LogSumExp} can not be used there to shift the scores of the output layer to a range that is amenable to the exponential function. BlackOut does not have this hyper-parameter to tune and the log-sum-exp trick still works for the weighted softmax function~(\ref{eq:posterior}). Due to the discriminative training of NCE and BlackOut, they share the same objective function~(\ref{eq:disc_loss}).

We shall emphasize that according to the theory of NCE, the $K$ samples should be sampled from the noise distribution $p_n(w|s)$. But in order to calculate $p_n(w|s)$, we need the $K$ samples drawn from $Q(w)$ beforehand. As an approximation, we use the same $K$ samples drawn from $Q(w)$ as the $K$ samples from $p_n(w|s)$, and only use the expression of $p_n(w|s)$ in (\ref{eq:noise distribution}) to evaluate the noise density value required by Eq. \ref{eq:posterior2}. This approximation is accurate since $\EE_{S_K\sim Q(w)}(p_n(w_i|s))=Q(w_i)$ as proved in Appendix~\ref{app:noise dist}, and we find empirically that it performs much better (with improved stability) than using a unigram noise distribution as in \cite{MniTeh12}.

\subsection{Related Work}
%Although we describe BlackOut in the context of RNNLM training, it %can be used to any networks with large softmax output layers.

Many approaches have been proposed to address the difficulty of training deep neural networks with large output spaces. In general, they can be categorized into four categories: 
\begin{itemize}
\item \textit{Hierarchical softmax} \citep{MorBen05,MniHin08} uses a hierarchical binary tree representation of the output layer with the $V$ words as its leaves. It allows exponentially faster computation of word probabilities and their gradients, but the predictive performance of the resulting model is heavily dependent on the tree used, which is often constructed heuristically. Moreover, by relaxing the constraint of a binary structure, \cite{LeOpaAll11} introduces a structured output layer with an arbitrary tree structure constructed from word clustering. All these methods speed up both the model training and evaluation considerably.
\item \textit{Sampling-based approximations} select at random or heuristically a small subset of the output layer and estimate gradient only from those samples. The use of importance sampling in \cite{BenSen03, BenSen08, JeaChoMem15}, and the use of NCE \citep{GutHyv12} in \cite{MniTeh12} all fall under this category, so does the more recent use of Locality Sensitive Hashing (LSH) techniques \citep{ShrPin14, VijShlMon14} to select a subset of good samples. BlackOut, with close connections to importance sampling and NCE, also falls in this category. All these approaches only speed up the model training, while the model evaluation still remains computationally challenging.
\item \textit{Self normalization} \citep{DevZbiHua14} extends the cross-entropy loss function by explicitly encouraging the partition function of softmax to be as close to 1.0 as possible. Initially, this approach only speeds up the model evaluation and more recently it's extended to facilitate the training as well with some theoretical guarantees \citep{AndKle14, AndRabKle15}.
\item \textit{Exact gradient on limited loss functions} \citep{VinBreBou15} introduces an algorithmic approach to efficiently compute the exact loss, gradient update for the output weights in $\Ocal(h^2)$ per training example instead of $\Ocal(Vh)$. Unfortunately, it only applies to a limited family of loss functions that includes squared error and spherical softmax, while the standard softmax isn't included.
\end{itemize}

As discussed in the introduction, BlackOut also shares some similarity to DropOut \citep{SriHinKri14}. While DropOut is often applied to input and/or hidden layers of deep neural networks to avoid feature co-adaptation and overfitting by uniform sampling, BlackOut applies to a softmax output layer, uses a weighted sampling, and employs a discriminative training loss. We chose the name \emph{BlackOut} in light of the similarities between our method and DropOut, and the complementary they offer to train deep neural networks.

\section{Implementation and Further Speed-up} 
\label{sec:implementation}

We implemented BlackOut on a standard machine with a dual-socket 28-core
Intel\textsuperscript\textregistered Xeon\textsuperscript\textregistered \footnote{\scriptsize Intel and Xeon are trademarks of Intel Corporation in the U.S. and/or other countries.} 
Haswell CPU. To achieve high throughput, we train RNNLM with
Back-Propagation Through Time (BPTT) \citep{RumHinWil88} with
mini-batches \citep{CheWanLiu14}. We use RMSProp \citep{Hinton12} for
learning rate scheduling and gradient clipping \citep{BenBouPas13} to
avoid the gradient explosion issue of recurrent networks. We use the
latest Intel MKL library (version 11.3.0) for SGEMM calls, which has
improved support for tall-skinny matrix-matrix multiplications, which
consume about 80\% of the run-time of RNNLMs.

It is expensive to access and update large models with billions of
parameters. Fortunately, due to the 1-of-$V$ encoding at input layer and
the BlackOut sampling at output layer, the model update on $W_{in}$ and
$W_{out}$ is sparse, i.e., only the model parameters corresponding to
input/output words and the samples in $S_K$ are updated at each training 
batch. However, subnet updates have to be done carefully due to the
dependency within RMSProp updating procedure. We therefore propose an
approximated RMSProp that enables an efficient subnet update and thus
speeds up the algorithm even further. Details can be found in
Appendix~\ref{app:subnet update}.

\section{Experiments} 
\label{sec:exp}

In our experiments, we first compare BlackOut, NCE and exact softmax
(without any approximation) using a small dataset. We then evaluate the
performance of BlackOut on the recently released one billion word
language modeling benchmark \citep{CheMikSch14} with a vocabulary size
of up to one million. We compare the performance of BlackOut on a standard 
CPU machine versus the state-of-the-arts reported in the literature that 
are achieved on GPUs or on clusters of CPU nodes. Our
implementation and scripts are open sourced at \url{https://github.com/IntelLabs/rnnlm}.

\paragraph{Corpus}

Models are trained and evaluated on two different corpora: a small
dataset provided by the RNNLM Toolkit\footnote{http://www.rnnlm.org/}, 
and the recently released one billion word language modeling
benchmark\footnote{https://code.google.com/p/1-billion-word-language-modeling-benchmark/},
which is perhaps the largest public dataset in language modeling. The
small dataset has 10,000 training sentences, with 71,350 words in total
and 3,720 unique words; and the test perplexity is evaluated on 1,000
test sentences. The one billion word benchmark was constructed from a
monolingual/English corpora; after all necessary preprocessing including
de-duplication, normalization and tokenization, 30,301,028 sentences
(about 0.8 billion words) are randomly selected for training, 6,075
sentences are randomly selected for test and the remaining 300,613
sentences are reserved for future development and can be used as holdout
set.

\subsection{Results On Small Dataset}

We evaluate BlackOut, NCE and exact softmax (without any approximation)
on the small dataset described above. This small dataset is used so that
we can train the standard RNNLM algorithm with exact softmax within a
reasonable time frame and hence to provide a baseline of expected
perplexity. There are many other techniques involved in the training,
such as RMSProp for learning rate scheduling \citep{Hinton12}, subnet
update (Appendix~\ref{app:subnet update}), and mini-batch splicing
\citep{CheWanLiu14}, etc., which can affect the perplexity
significantly. For a fair comparison, we use the same tricks and settings for all the algorithms, and only evaluate the impact of the different approximations (or no approximation) on the softmax output layer.  Moreover, there are a few hyper-parameters that have strong impact on the predictive
performance, including $\alpha$ of the proposal distribution
$Q_\alpha(w)$ for BlackOut and NCE, and additionally for NCE, the
partition function $Z$. We pay an equal amount of effort to tune these
hyper-parameters for BlackOut and NCE on the validation set as number of
samples increases.

Figure \ref{fig:small_samples} shows the perplexity reduction as a
function of number of samples $K$ under two different vocabulary settings:
(a) a full vocabulary of 3,720 words, and (b) using the most frequent
2,065 words as vocabulary. The latter is a common approach used in
practice to accelerate RNNLM computation by using RNNLM to predict only
the most frequent words and handling the rest using an $n$-gram model
\citep{SchGau05}. We will see similar vocabulary settings when we
evaluate BlackOut on the large scale one billion word benchmark.

\begin{figure*}[htb]\vspace{-0.0cm}
  \begin{center}
    \subfigure{\label{fig:small_full}\includegraphics[width=2.35in]{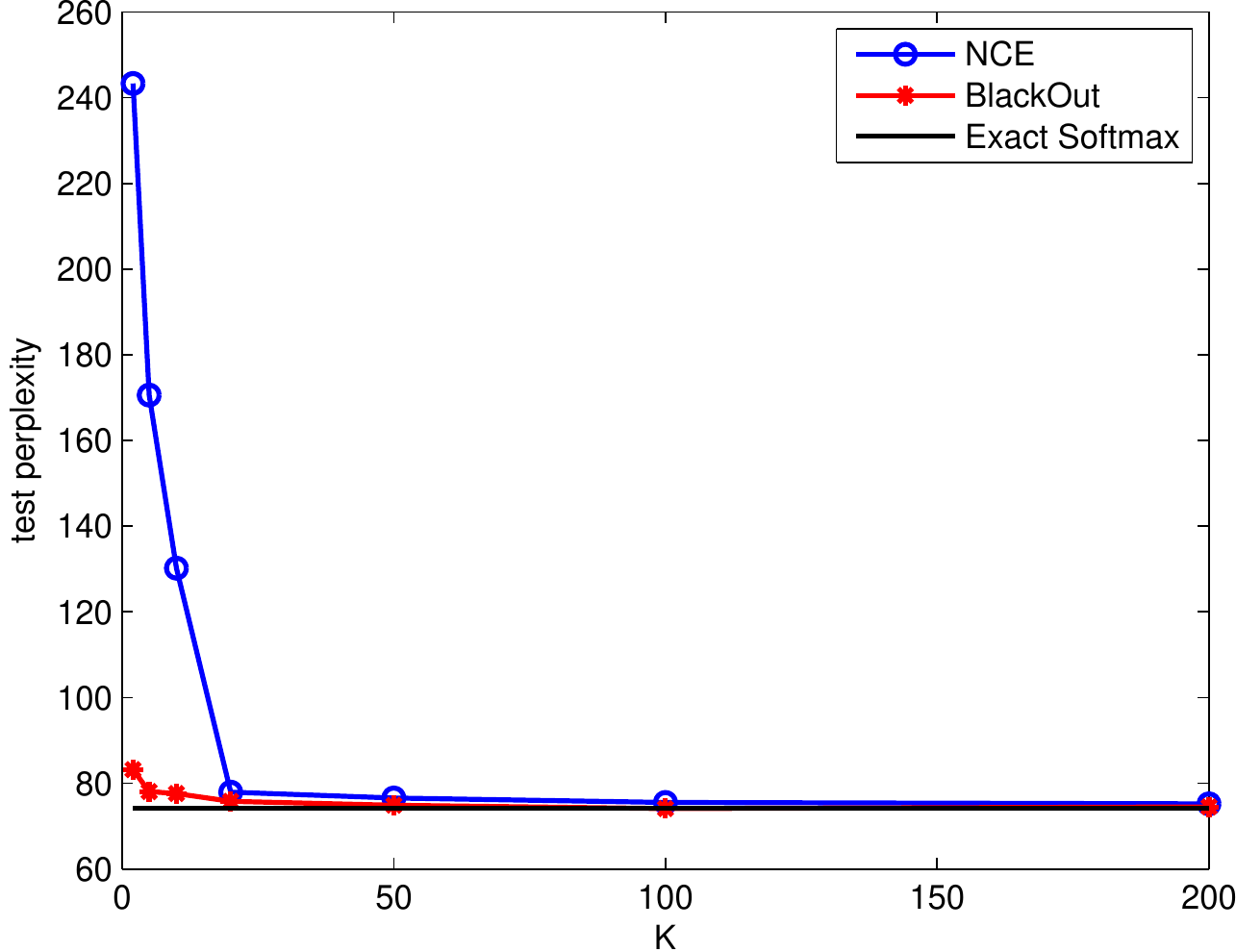}}\hspace{0.3in}
    \subfigure{\label{fig:small_trunc}\includegraphics[width=2.3in]{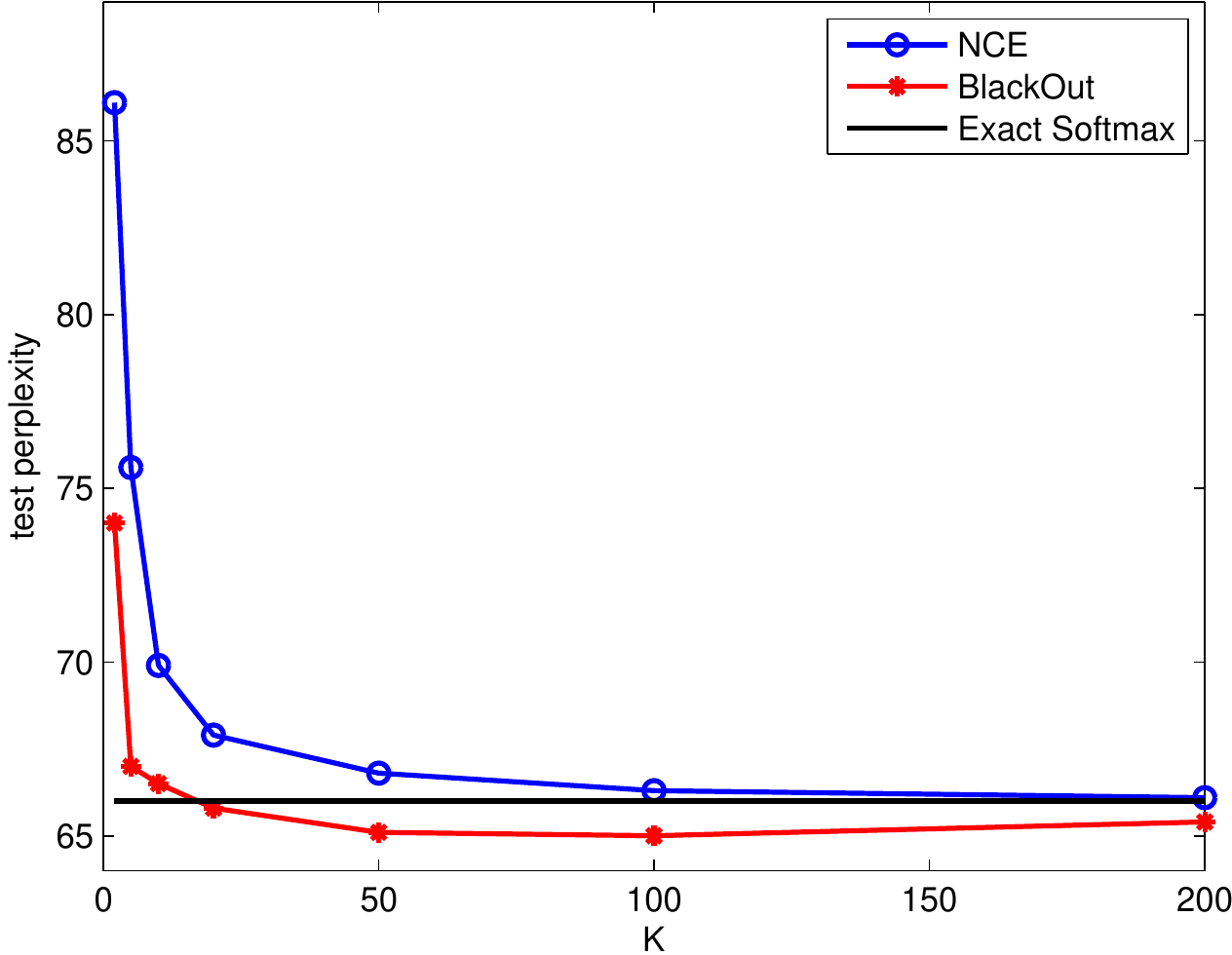}}
  \end{center}\vspace{-0.0cm}
  \caption{Test perplexity evolution as a function of number of samples $K$ (a) with a full vocabulary of 3,720 words, and (b) with the most frequent 2,065 words in vocabulary. The experiments are executed on the RNNLMs with 16 hidden units.}\label{fig:small_samples}\vspace{-0.0cm}
\end{figure*}

As can be seen, when the size of the samples increases, in general both BlackOut and NCE improve their prediction accuracy under the two vocabulary settings, and even with only 2 samples both algorithms still converge to reasonable solutions. BlackOut can utilize samples much more effectively than NCE as manifested by the significantly lower perplexities achieved by BlackOut, especially when number of samples is small; Given about 20-50 samples, BlackOut and NCE reach similar perplexities as the exact softmax, which is expensive to train as it requires to evaluate all the words in the vocabularies. When the vocabulary size is 2,065, BlackOut achieves even better perplexity than that of the exact softmax. This is possible since BlackOut does stochastic sampling at each training example and uses the full softmax output layer in prediction; this is similar to DropOut that is routinely used in input layer and/or hidden layers of deep neural networks \citep{SriHinKri14}. As in DropOut, BlackOut has the benefit of regularization and avoids feature co-adaption and is possibly less prone to overfitting. To verify this hypothesis, we evaluate the perplexities achieved on the training set for different algorithms and provide the results in Figure~\ref{fig:small_samples_train} at Appendix~\ref{app:train perplexy}. As can been seen, the exact softmax indeed overfits to the training set and reaches lower training perplexities than NCE and BlackOut.

Next we compare the convergence rates of BlackOut and NCE when training the RNNLMs with 16 hidden units for a full vocabulary of 3,720 words. Figures \ref{fig:full_lc_10} and \ref{fig:full_lc_50} plot the learning curves of BlackOut and NCE when 10 samples or 50 samples are used in training, respectively. The figure shows that BlackOut enjoys a much faster convergence rate than NCE, especially when number of samples is small (Figure~\ref{fig:full_lc_10}); but this advantage gets smaller when number of samples increases (Figure~\ref{fig:full_lc_50}). We also observed similar behavior when we evaluated BlackOut and NCE on the large scale one billion word benchmark.

\begin{figure*}[htb]\vspace{-0.0cm}
  \begin{center}
    \subfigure{\label{fig:full_lc_10}\includegraphics[width=2.35in]{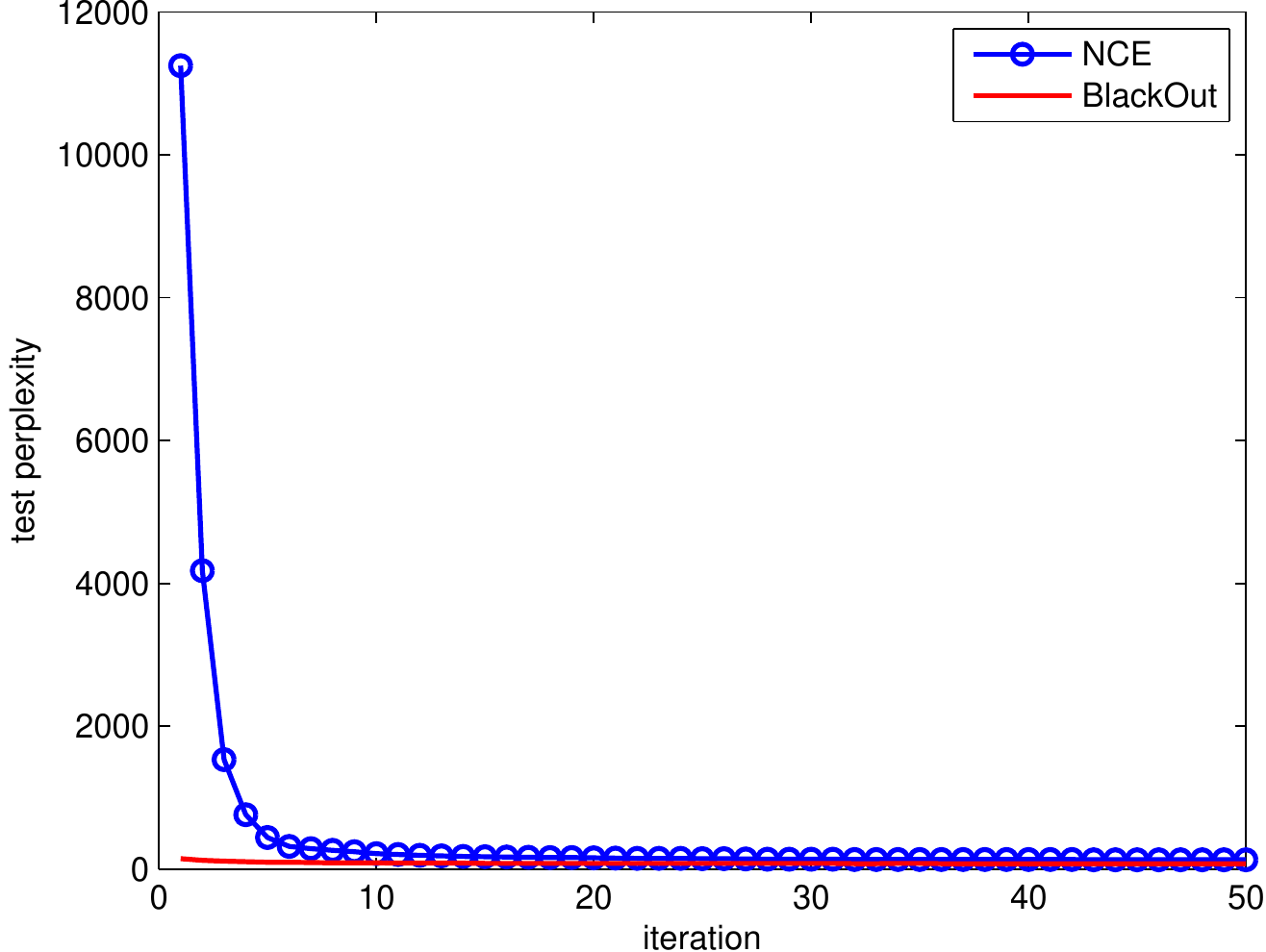}}\hspace{0.3in}
    \subfigure{\label{fig:full_lc_50}\includegraphics[width=2.3in]{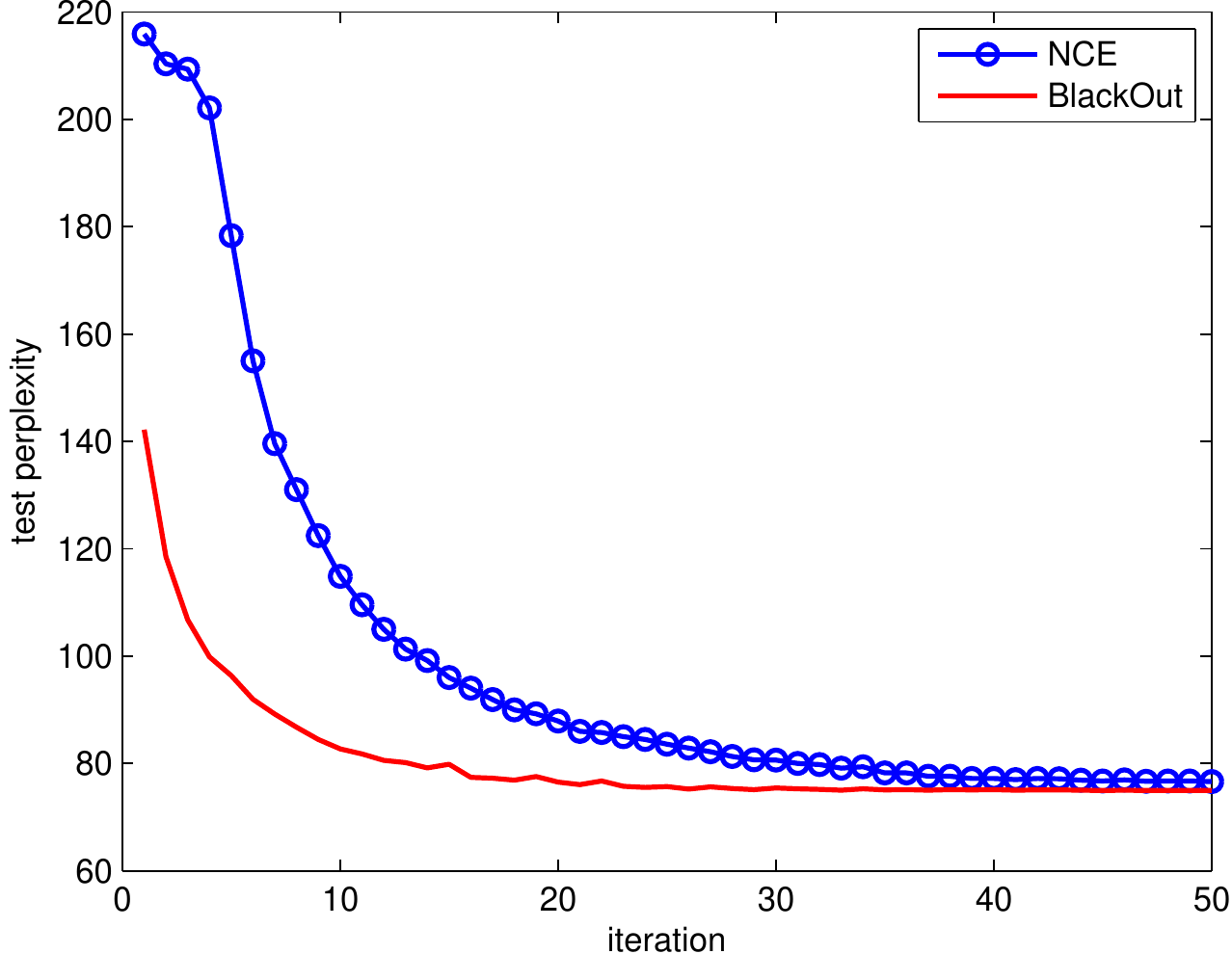}}
  \end{center}\vspace{-0.0cm}
  \caption{The learning curves of BlackOut and NCE when training the RNNLMs with 16 hidden units with (a) 10 samples, and (b) 50 samples.}\label{fig:full_lc}\vspace{-0.0cm}
\end{figure*}

\subsection{Results On One Billion Word Benchmark}
We follow the experiments from \cite{WilPraMrv15} and \cite{LeJaiHin15} and compare the performance of BlackOut with the state-of-the-art results provided by them. While we evaluated BlackOut on a dual-socket 28-core Intel\textsuperscript\textregistered Xeon\textsuperscript\textregistered Haswell machine, \cite{WilPraMrv15} implemented RNNLM with the NCE approximation on NVIDIA GTX Titan GPUs, and \cite{LeJaiHin15} executed an array of recurrent networks, including deep RNN and LSTM, without approximation on a CPU cluster. Besides the time-to-solution comparison, these published results enable us to cross-check the predictive performance of BlackOut with another implementation of NCE or with other competitive network architectures.

\subsubsection{When vocabulary size is 64K}

Following the experiments in \cite{WilPraMrv15}, we evaluate the performance of BlackOut on a vocabulary of 64K most frequent words. This is similar to the scenario in Figure~\ref{fig:small_trunc} where the most frequent words are kept in vocabulary and the rest rare words are mapped to a special \texttt{<unk>} token. We first study the importance of $\alpha$ of the proposal distribution $Q_\alpha(w)$ and the discriminative training~(\ref{eq:disc_loss}) as proposed in BlackOut. As we discussed in Sec.~\ref{sec:blackout}, when $\alpha=0$, the proposal distribution $Q_\alpha(w)$ degenerates to a uniform distribution over all the words in the vocabulary, and when $\alpha=1$, we recover the unigram distribution. Thus, we evaluate the impact of $\alpha$ in the range of $[0, 1]$. Figure~\ref{fig:power} shows the evolution of test perplexity as a function of $\alpha$ for the RNNLMs with 256 hidden units. As can be seen, $\alpha$ has a significant impact on the prediction accuracy. The commonly used uniform distribution (when $\alpha=0$) and unigram distribution (when $\alpha=1$) often yield sub-optimal solutions. For the dataset and experiment considered, $\alpha=0.4$ gives the best perplexity (consistent on holdout set and test set). We therefore use $\alpha=0.4$ in the experiments that follow. The number of samples used is 500, which is about 0.8\% of the vocabulary size. 

\begin{figure*}[htb]\vspace{-0.0cm}
  \begin{center}
    \subfigure{\label{fig:power}\includegraphics[width=2.3in]{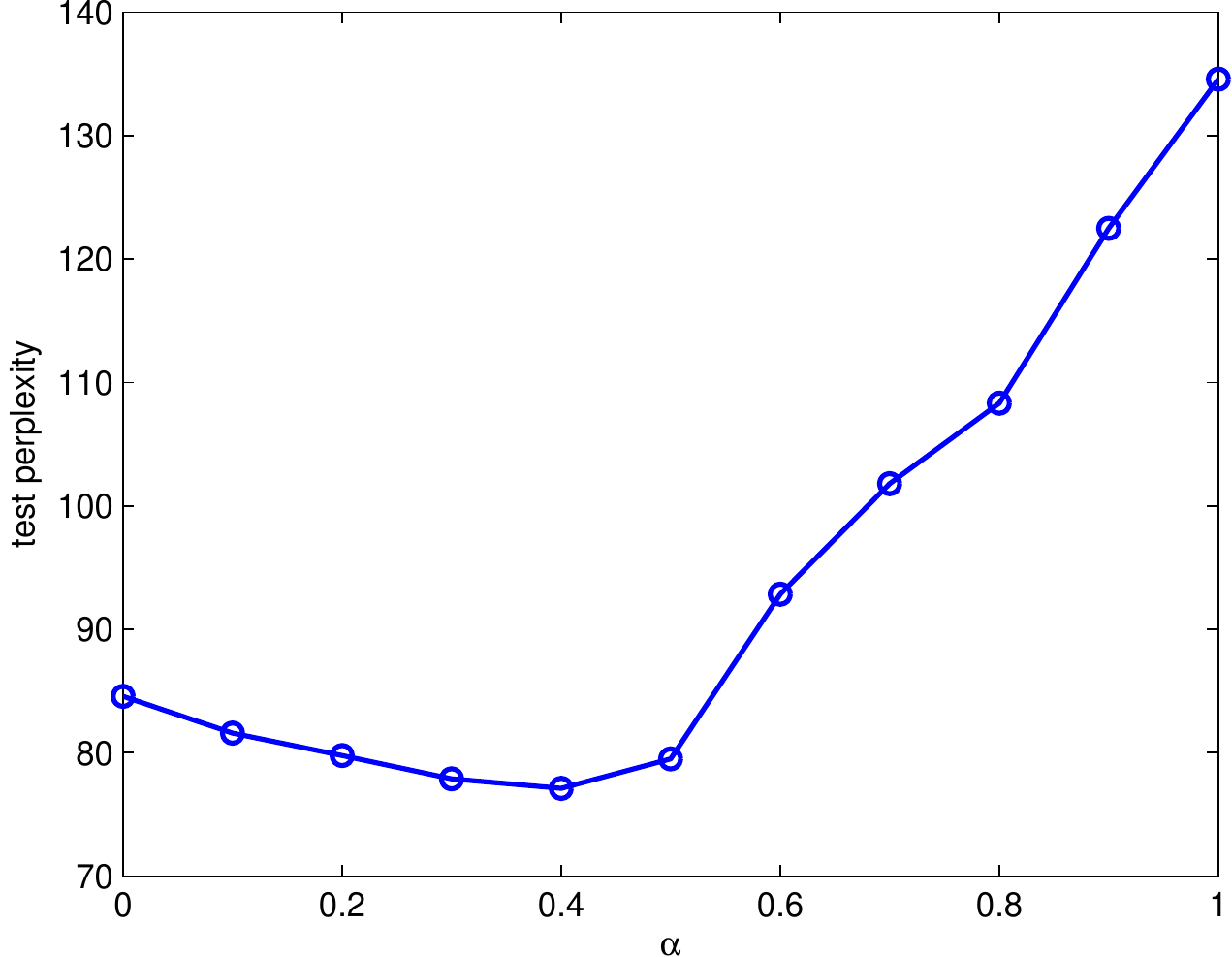}}\hspace{0.3in}
    \subfigure{\label{fig:disc}\includegraphics[width=2.3in]{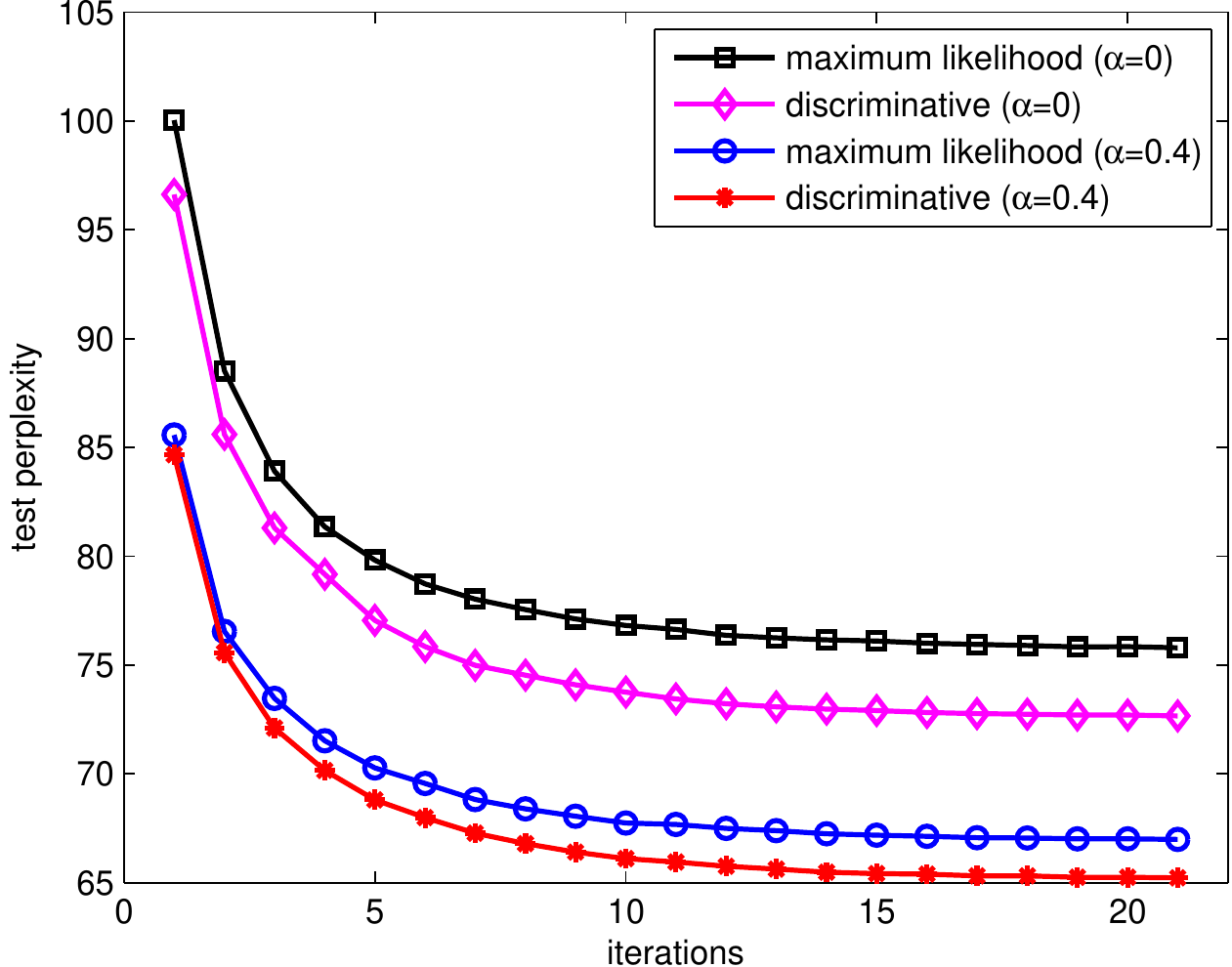}}
  \end{center}\vspace{-0.0cm}
  \caption{(a) The impact of $\alpha$ evaluated when 256 hidden units are used; (b) The learning curves of maximum likelihood and discriminative training when 512 hidden units are used.} \vspace{-0.0cm}
\end{figure*}

Figure~\ref{fig:disc} demonstrates the impact of discriminative training~(\ref{eq:disc_loss}) over the maximum likelihood training (the first term of Eq.~\ref{eq:disc_loss}) on the RNNLMs with 512 hidden units using two different $\alpha$'s. In general, we observe 1-3 points of perplexity reduction due to discriminative training over traditional maximum likelihood training.

Finally, we evaluate the scalability of BlackOut when number of hidden
units increases. As the dataset is large, we observed that the
performance of RNNLM depends on the size of the hidden layer: they
perform better as the size of the hidden layer gets larger. As a
truncated 64K word vocabulary is used, we interpolate the RNNLM scores
with a full size 5-gram to fill in rare word probabilities
\citep{SchGau05,ParLiuGal10}. We report the interpolated perplexities BlackOut achieved and compare them with the results from \cite{WilPraMrv15} in Table~\ref{tab:64k}. As can be
seen, BlackOut reaches lower perplexities than those reported in
\cite{WilPraMrv15} within comparable time frames (often 10\%-40\%
faster). We achieved a perplexity of 42.0 when the hidden layer size is
4096. To the best of our knowledge, this is the lowest perplexity
reported on this benchmark.

\begin{table*}[htb]
\caption {Performance on the one billion word benchmark by interpolating RNNLM on a 64K word vocabulary with a full-size KN 5-gram LM.}
\label{tab:64k}
\begin{center}
\begin{threeparttable}
\begin{tabular}{|l|c|c|c|c|c|}
\hline           & \#Params  & \multicolumn{2}{c|}{Test Perplexity} & \multicolumn{2}{c|}{Time to Solution} \\
\cline{3-6}  \multicolumn{1}{|c|}{Model} & [millions] & Published$^1$ & BlackOut        & Published$^1$ & BlackOut \\
\hline          KN 5-gram      & 1,748 & \multicolumn{2}{c|}{66.95} & \multicolumn{2}{c|}{45m} \\
\hline          RNN-128 + KN 5-gram     & 1,764 & 60.8  & 59.0 & 6h    & 9h  \\
\hline          RNN-256 + KN 5-gram      & 1,781 & 57.3 & 55.1 & 16h   & 14h  \\
\hline          RNN-512 + KN 5-gram      & 1,814 & 53.2 & 51.5 & 1d2h  & 1d  \\
\hline          RNN-1024 + KN 5-gram     & 1,880  & 48.9 & 47.6 & 2d2h  & 1d14h  \\
\hline          RNN-2048 + KN 5-gram     & 2,014 & 45.2  & 43.9 & 4d7h  & 2d15h  \\
\hline          RNN-4096 + KN 5-gram     & 2,289 & \textbf{42.4} & \textbf{42.0}  & 14d5h & 10d  \\
\hline
\end{tabular}
\begin{tablenotes}
  \scriptsize\item $^1$Data from Table 1 of \cite{WilPraMrv15}.
\end{tablenotes}
\end{threeparttable}
\end{center}
\end{table*}

\subsubsection{When vocabulary size is 1M}
In the final set of experiments, we evaluate the performance of BlackOut
with a very large vocabulary of 1,000,000 words, and the results are
provided in Table~\ref{tab:1m}. This is the largest vocabulary used on
this benchmark that we could find in existing literature. We consider
the RNNLM with 1,024 hidden units (about 2 billion parameters) and 2,048
hidden units (about 4.1 billion parameters) and compare their test
perplexities with the results from \cite{LeJaiHin15}. We use 2,000
samples, 0.2\% of the vocabulary size, for BlackOut training with
$\alpha=0.1$. Comparing to the experiments with the 64K word vocabulary,
a much smaller $\alpha$ is used here since the sampling rate (0.2\%) is
much lower than that is used (0.8\%) when the vocabulary size is 64K,
and a smaller $\alpha$ strikes a better balance between sample coverage
per training example and convergence rate. In contrast, NCE with the
same setting converges very slowly (similar to
Figure~\ref{fig:full_lc_10}) and couldn't reach a competitive perplexity
within the time frame considered, and its results are not reported here.

As the standard RNN/LSTM algorithms (without approximation) are used in
\cite{LeJaiHin15}, a cluster of 32 CPU machines (at least 20 cores each)
are used to train the models for about 60 hours. BlackOut enables us to
train this large model using a single CPU machine for 175
hours. Since different model architectures are used in the experiments (deep RNN/LSTM vs. standard RNNLM), the direct comparison of test perplexity isn't very meaningful. However, this experiment demonstrates that even though our
largest model is about 2-3 times larger than the models evaluated in
\cite{LeJaiHin15}, BlackOut, along with a few other optimization
techniques, make this large scale learning problem still feasible on a
single box machine without using GPUs or CPU clusters.

\begin{table*}[htb]
\caption {Performance on the one billion word benchmark with a vocabulary of 1,000,000 words. Single model (RNN/LSTM-only) perplexities are reported; no interpolation is applied to any models.}
\label{tab:1m}
\begin{center}
\begin{tabular}{|c|l|c|}
\hline           & Model & Perplexity \\
\hline  Results from      & LSTM (512 units) & \textbf{68.8}  \\
\cline{2-3}  \cite{LeJaiHin15} & IRNN (4 layers, 512 units) & 69.4  \\
\cline{2-3}  60 hours  & IRNN (1 layer, 1024 units + 512 linear units) & 70.2  \\
\cline{2-3}  32 machines & RNN (4 layers, 512 tanh units) & 71.8 \\
\cline{2-3}  & RNN (1 layer, 1024 tanh units + 512 linear units) & 72.5  \\
\hline          Our Results    & RNN (1 layer, 1024 sigmoid units) & 78.4  \\
\cline{2-3}          175 hours, 1 machine   & RNN (1 layer, 2048 sigmoid units) & \textbf{68.3}  \\
\hline
\end{tabular}
\end{center}
\end{table*}

%\begin{table*}[htb]
%\caption {Performance on the one billion word benchmark with a vocabulary of 1,000,000 words.}
%\label{tab:1m2}
%\begin{center}
%\begin{tabular}{|c|l|c|}
%\hline           & Model & Perplexity \\
%\hline  Google 32 machines 60 hours      & LSTM (512 units) & \textbf{68.8}  \\
%\hline  PCL 1 machine 175 hours    & RNN (1 layer, 2048 sigmoid units) & \textbf{68.3}  \\
%\hline
%\end{tabular}
%\end{center}
%\end{table*}

Last, we collect all the state of the art results we are aware of on this benchmark and summarize them in Table~\ref{tab:stoa}. Since all the models are the interpolated ones, we interpolate our best RNN model\footnote{To be consistent with the benchmark in \cite{CheMikSch14}, we retrained it with the full-size vocabulary of about 0.8M words.} from Table~\ref{tab:1m} with the KN 5-gram model and achieve a perplexity score of 47.3. Again, different papers provide their best models trained with different architectures and vocabulary settings. Hence, an absolutely fair comparison isn't possible. Regardless of these discrepancies, our models, within different groups of vocabulary settings, are very competitive in terms of prediction accuracy and model size.
   
\begin{table*}[htb]
\caption {Comparison with the state of the art results reported on the one billion word benchmark.}
\label{tab:stoa}
\begin{center}
\begin{threeparttable}
\begin{tabular}{|l|c|c|}
\hline       \multicolumn{1}{|c|}{Model}    & \#Params [billions] & Test Perplexity   \\
\hline          RNN-1024 (full vocab) + MaxEnt$^1$ & 20 & 51.3  \\
\hline          RNN-2048 (full vocab) + KN 5-gram$^2$ & 5.0 & 47.3  \\
\hline          RNN-1024 (full vocab) + MaxEnt + 3 models$^1$ & 42.9  & 43.8\\
\hline          RNN-4096 (64K vocab) + KN 5-gram$^3$   & 2.3  & 42.4  \\
\hline          RNN-4096 (64K vocab) + KN 5-gram$^2$   & 2.3  & 42.0  \\
\hline
\end{tabular}
\begin{tablenotes}
  \scriptsize\item $^1$Data from \cite{CheMikSch14}; $^2$Our results; $^3$Data from \cite{WilPraMrv15}.
\end{tablenotes}
\end{threeparttable}
\end{center}
\end{table*}

\section{Conclusion}\label{sec:conclusion}
We proposed \emph{BlackOut}, a sampling-based approximation, to train RNNLMs
with very large vocabularies (e.g., 1 million). We established its
connections to importance sampling and noise contrastive estimation
(NCE), and demonstrated its stability, sample efficiency and rate of convergence on the recently released one billion word language modeling
benchmark. We achieved the lowest reported perplexity on this benchmark without using GPUs or CPU clusters.

As for future extensions, our plans include exploring other proposal
distributions $Q(w)$, and theoretical properties of the generalization
property and sample complexity bounds for BlackOut. We will also
investigate a multi-machine distributed implementation. 

\section*{Acknowledgments}
We would like to thank Oriol Vinyals, Andriy Mnih and the anonymous reviewers for their excellent comments and suggestions, which helped improve the quality of this paper.

%\bibliography{iclr2016_conference}
\bibliography{bibfile}
\bibliographystyle{iclr2016_conference}

\newpage
\appendix

\begin{center}
\Large{BlackOut: Speeding up Recurrent Neural Network Language Models with Very Large Vocabularies}\\
\large{\color{red}(Supplementary Material)}
\end{center}

\section{Noise distribution $p_n(w_i|s)$} \label{app:noise dist}
\begin{theorem}
The noise distribution function $p_n(w_i|s)$ defined in Eq.~\ref{eq:noise distribution} is a probability distribution function under the expectation that $K$ samples in $S_K$ are drawn from $Q(w)$ randomly, $S_K\sim Q(w)$, such that $\EE_{S_K\sim Q(w)}(p_n(w_i|s))=Q(w_i)$ and $\EE_{S_K\sim Q(w)}(\sum_{i=1}^Vp_n(w_i|s))=1$.
\end{theorem}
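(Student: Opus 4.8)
The plan is to compute both expectations directly, exploiting linearity of expectation together with the fact that the $K$ samples in $S_K$ are drawn independently from $Q(w)$, and the single algebraic identity $Q(w_m)\,q_m = 1$ for every $m$, which is immediate from the definition $q_j := 1/Q(w_j)$. That cancellation is the whole engine of the argument: it is precisely what makes the context-dependent reweighting by $q_j/q_i$ in Eq.~\ref{eq:noise distribution} integrate back to the proposal $Q$.

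First I would establish $\EE_{S_K\sim Q(w)}(p_n(w_i|s))=Q(w_i)$. Since $p_n(w_i|s)$ is the average of $K$ summands, one per sampled index $j\in S_K$, and the samples are i.i.d.\ draws from $Q$, it suffices by linearity to evaluate the expectation of a single summand. For one draw $w_j\sim Q$,
\begin{align}
\EE_{w_j\sim Q}\sbr{\frac{q_j}{q_i}\,p_\theta(w_j|s)} = \sum_{m=1}^V Q(w_m)\,\frac{q_m}{q_i}\,p_\theta(w_m|s) = \frac{1}{q_i}\sum_{m=1}^V p_\theta(w_m|s),
\end{align}
where the last equality uses $Q(w_m)q_m=1$. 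Because $p_\theta(\cdot|s)$ is the normalized softmax of Eq.~\ref{eq:full_softmax} and hence sums to one over the vocabulary, this collapses to $1/q_i = Q(w_i)$. Averaging $K$ identical such terms then gives $\EE_{S_K\sim Q(w)}(p_n(w_i|s))=Q(w_i)$.

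Second, the normalization claim follows immediately from the first by interchanging the finite vocabulary sum with the expectation:
\begin{align}
\EE_{S_K\sim Q(w)}\rbr{\sum_{i=1}^V p_n(w_i|s)} = \sum_{i=1}^V \EE_{S_K\sim Q(w)}\rbr{p_n(w_i|s)} = \sum_{i=1}^V Q(w_i) = 1,
\end{align}
using that $Q$ is a probability distribution over the $V$ words. There is no genuine obstacle here; the only points I would state carefully are that the interchange of the finite sum and the expectation is legitimate (the vocabulary is finite), and that the reduction relies explicitly on $\sum_m p_\theta(w_m|s)=1$. I would therefore flag the identity $Q(w_m)q_m=1$ as the heart of the proof and present both claims as one short computation followed by a one-line corollary.
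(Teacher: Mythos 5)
Your proof is correct and takes essentially the same route as the paper's: both reduce the first claim, via linearity over the $K$ i.i.d.\ draws and the cancellation $Q(w_m)q_m=1$, to the single-draw importance-sampling identity $\EE_{w\sim Q}\left[p_\theta(w|s)/Q(w)\right]=\sum_{m=1}^V p_\theta(w_m|s)=1$, and both obtain the second claim by interchanging the finite vocabulary sum with the expectation and using $\sum_{i=1}^V Q(w_i)=1$. The only cosmetic difference is that you evaluate the expectation of each summand directly, whereas the paper first factors $Q(w_i)/K$ out of the expectation and then shows the remaining sum over $S_K$ has expectation $K$.
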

\begin{proof}
\begin{align*}
  \EE_{S_K\sim Q(w)}(p_n(w_i|s)) &= \EE_{S_K\sim Q(w)}\rbr{\frac{1}{K}\sum_{j\in S_K}\frac{q_j}{q_i}p_\theta(w_j|s)} \\
  &= \frac{Q(w_i)}{K}\EE_{S_K\sim Q(w)}\rbr{\sum_{j\in S_K}\frac{p_\theta(w_j|s)}{Q(w_j)}} \\
  &= \frac{Q(w_i)}{K}\sum_{w_k, \forall k\in S_K}\rbr{\prod_{k\in S_K}Q(w_k)\cdot\sum_{j\in S_K}\frac{p_\theta(w_j|s)}{Q(w_j)}} \\
  &= \frac{Q(w_i)}{K}K \\
  &= Q(w_i)
\end{align*}
\begin{align*}
  \EE_{S_K\sim Q(w)}\rbr{\sum_{i=1}^Vp_n(w_i|s)}= \sum_{i=1}^V\EE_{S_K\sim Q(w)}\rbr{p_n(w_i|s)}=\sum_{i=1}^V\rbr{Q(w_i)}=1
\end{align*}
\end{proof}

\section{Perplexities on Training set} \label{app:train perplexy}

\begin{figure*}[htb]\vspace{-0.0cm}
  \begin{center}
    \subfigure{\label{fig:small_full_train}\includegraphics[width=2.35in]{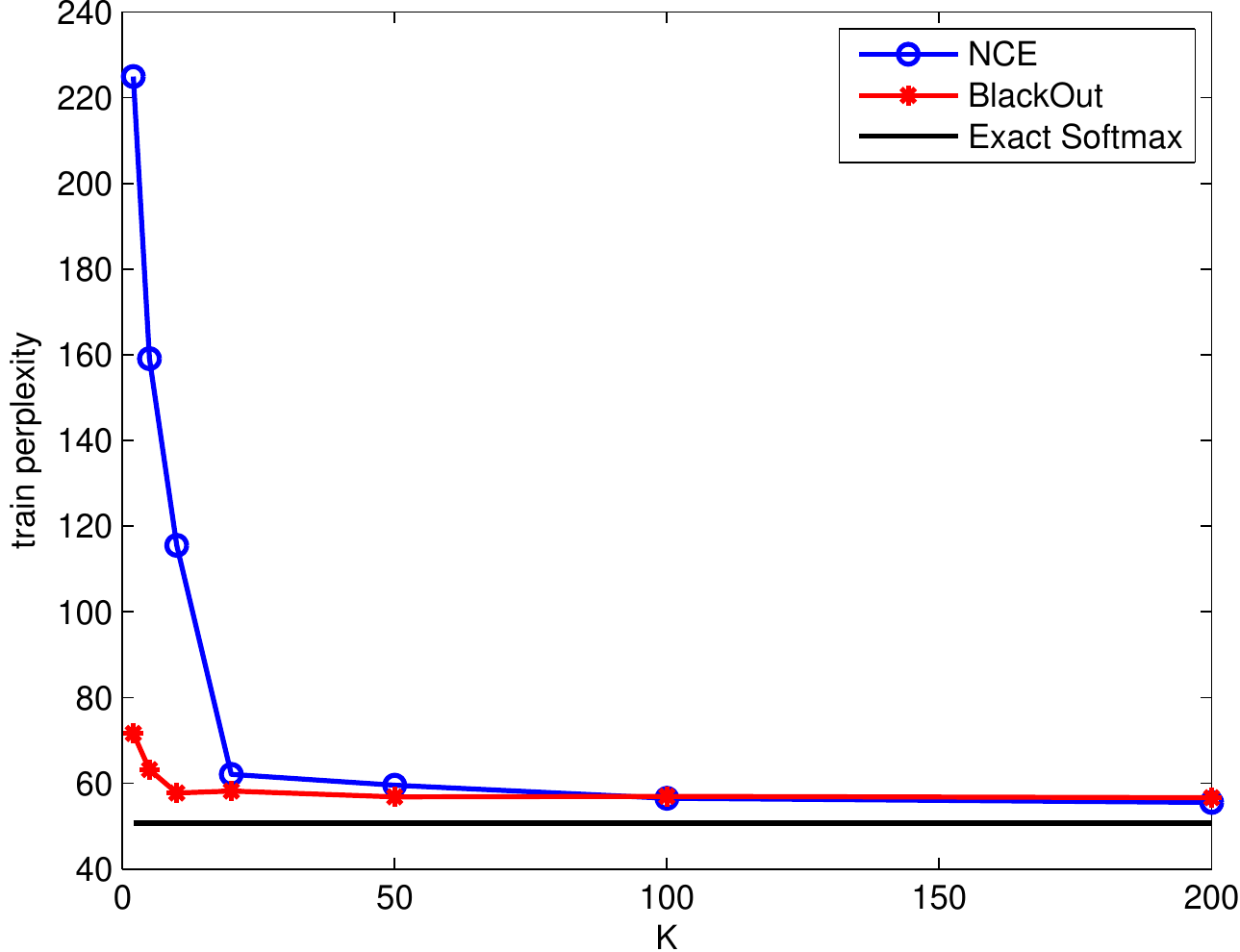}}\hspace{0.3in}
    \subfigure{\label{fig:small_trunc_train}\includegraphics[width=2.3in]{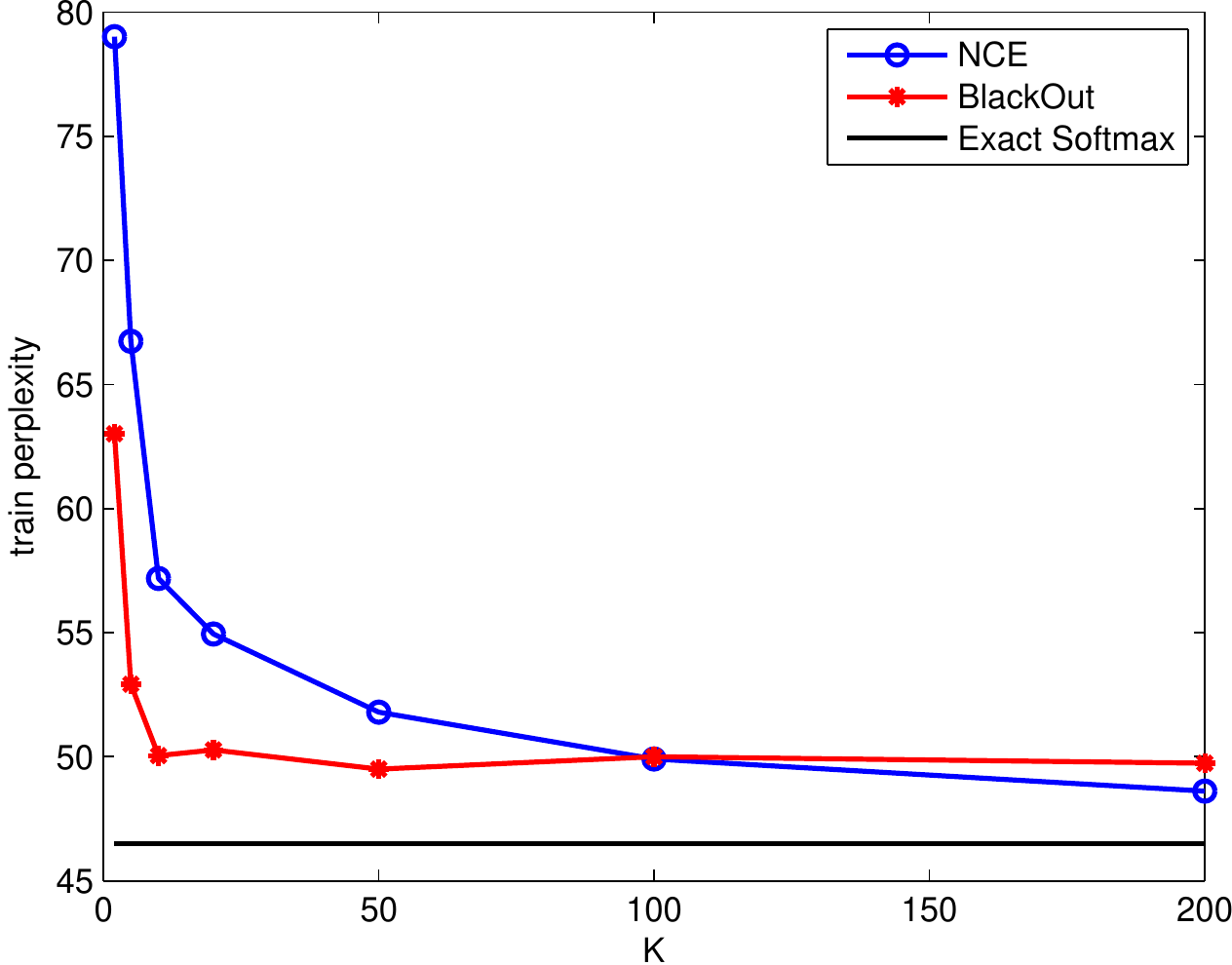}}
  \end{center}\vspace{-0.0cm}
  \caption{Training perplexity evolution as a function of number of samples $K$ (a) with a full vocabulary of 3,720 words, and (b) with the most frequent 2,065 words in vocabulary. The experiments are executed on the RNNLMs with 16 hidden units.}\label{fig:small_samples_train}\vspace{-0.0cm}
\end{figure*}

\section{Subnet Update With Approximated RMSProp} \label{app:subnet update}
RMSProp \citep{Hinton12} is an adaptive learning rate method that has found much success in practice. Instead of using a single learning rate to all the model parameters in $\Omega$, RMSProp dedicates a learning rate for each model parameter and normalizes the gradient by an exponential moving average of the magnitude of the gradient:
\begin{align}
  v_t = \beta v_{t-1} + (1 - \beta)(\nabla J)^2
  \label{eq:rmsprop velocity}
\end{align}
where $\beta\in(0,1)$ denotes the decay rate. The model update at time step $t$ is then given by
\begin{align}
  \theta_t=\theta_{t-1}+\epsilon\frac{\nabla J(\theta_{t-1})}{\sqrt{v_t+\lambda}}
  \label{eq:rmsprop update}
\end{align}
where $\epsilon$ is the learning rate and $\lambda$ is a damping factor, e.g., $\lambda=10^{-6}$. While RMSProp is one of the most effective learning rate scheduling techniques, it requires a large amount of memory to store per-parameter $v_t$ in addition to model parameter $\Omega$ and their gradients.

It is expensive to access and update large models with billions of parameters. Fortunately, due to the 1-of-$V$ encoding at input layer and the BlackOut sampling at output layer, the model update on $W_{in}$ and $W_{out}$ is sparse, e.g., only the model parameters corresponding to input/output words and the samples in $S_K$ are to be updated.\footnote{The parameter update on $W_r$ is still dense, but its size is several orders of magnitude smaller than those of $W_{in}$ and $W_{out}$.} For Eq.~\ref{eq:rmsprop velocity}, however, even a model parameter is not involved in the current training, its $v_t$ value still needs to be updated by $v_t=\beta v_{t-1}$ since its $(\nabla J)^2=0$. Ignoring this update has detrimental effect on the predictive performance; in our experiments, we observed $5-10$ point perplexity loss if we ignore this update completely. 

We resort to an approximation to $v_t=\beta v_{t-1}$. Given $p_u(w)$ is the probability of a word $w$ being selected for update, the number of time steps elapsed when it is successfully selected follows a geometric distribution with a success rate $p_u(w)$, whose mean value is $1/p_u(w)$. Assume that an input/output word is selected according to the unigram distribution $p_{uni}(w)$ and the samples in $S_K$ are drawn from $Q_\alpha(w)$, Eq.~\ref{eq:rmsprop velocity} can be approximated by
\begin{align}
  v_t \approx \beta^{1/p_u} v_{t-n} + (1 - \beta)(\nabla J)^2
  \label{eq:rmsprop velocity2}
\end{align}
with
\begin{align}
  \label{eq:success rate}
  p_u(w) =
  \begin{cases}
    p_{uni}(w)\times B\times T & \text{ for word $w$ at input layer}  \\
    p_{uni}(w)\times B\times T+Q_\alpha(w)\times K\times T & \text{ for word $w$ at output layer},
  \end{cases}
\end{align}
where $B$ is the mini-batch size and $T$ is the BPTT block size. Now we can only update the model parameters, typically a tiny fraction of $\Omega$, that are \textit{really} involved in the current training, and thus speed up the RNNLM training further.

\end{document}